\title{\LARGE \bf
	Geometric interpretation of the general POE model for a serial-link robot via conversion into D-H parameterization
}
\author{Liao~Wu,~\IEEEmembership{Member,~IEEE}, Ross~Crawford, and Jonathan~Roberts,~\IEEEmembership{Senior Member,~IEEE}
	\thanks{This work was supported by the Vice-Chancellor's Research Fellowship (322450-0096/08) and the IHBI Early Career Researcher Development Scheme (243218-0233/07) awarded to Dr. Liao Wu by Queensland University of Technology.
	}
	\thanks{L. Wu was with the Australian Centre for Robotic Vision, Science and Engineering Faculty, Queensland University of Technology. He is now with the School of Mechanical and Manufacturing Engineering, Faculty of Engineering, University of New South Wales. {\tt\small dr.liao.wu@ieee.org}}
	\thanks{R. Crawford and J. Roberts are with the Australian Centre for Robotic Vision, Science and Engineering Faculty, Queensland University of Technology.}
	\thanks{Source code in MATLAB can be found at \href{https://github.com/drliaowu/GeneralPOE2DH}{https://github.com/drliaowu/GeneralPOE2DH}}
}
\begin{document}
	
\newtheorem{lemma}{Lemma}
\newtheorem{remark}{Remark}

\maketitle

\begin{abstract}
While Product of Exponentials (POE) formula has been gaining maturity in modeling the kinematics of a serial-link robot, the Denavit-Hartenberg (D-H) notation is still the most widely used due to its intuitive and concise geometric interpretation of the robot. This paper has developed an analytical solution to automatically convert a POE model into a D-H model for a robot with revolute, prismatic, and helical joints, which are the complete set of three basic one degree of freedom lower pair joints for constructing a serial-link robot. The conversion algorithm developed can be used in applications such as calibration where it is necessary to convert the D-H model to the POE model for identification and then back to the D-H model for compensation. The equivalence of the two models proved in this paper also benefits the analysis of the identifiability of the kinematic parameters. It is found that the maximum number of identifiable parameters in a general POE model is $5h+4r+2t +n+6$ where $h$, $r$, $t$, and $n$ stand for the number of helical, revolute, prismatic, and general joints, respectively. It is also suggested that the identifiability of the base frame and the tool frame in the D-H model is restricted rather than the arbitrary six parameters as assumed previously.
\end{abstract}


\section{Introduction}


The product of exponentials (POE) formula has been gaining maturity in modeling the kinematics of serial-link robots since it was proposed in 1984 \cite{brockett1984robotic,murray1994mathematical,selig2005geometric,lynch2017modern}.
The method defines an initial state where all the joints are in their zero positions, attaches two frames to the base link and the tool link respectively, and extracts a set of joint twists from the geometry of the joint axes.
Then the kinematics of the robot can be elegantly expressed as a product of exponentials of the joint twists post-multiplied by the transform between the two initial frames.
Some of the advantages of using the POE formula include the simplicity of modeling, the close connection with the advanced Lie groups theory, and the natural avoidance of singularity for calibration \cite{okamura1996kinematic,he2010kinematic,yang2014minimal,wu2015minimal}.

On the other hand, the Denavit-Hartenberg (D-H) notation that was invented in the 1950s \cite{denavit1955kinematic} is still the most widely used method for kinematics modeling \cite{craig2005introduction,siciliano2008springer,siciliano2010robotics}.
The conventions use four D-H parameters to formulate the transformation between adjacent frames that are specially attached to each link and represent the kinematics as a concatenation of the transformations.
Based on the D-H notation, a large body of algorithms and code implementations have been established for the dynamics, motion planning, control, etc. \cite{corke2011robotics}.
Another reason behind the popularity of the D-H model is probably its intuitiveness and conciseness in interpreting the geometry of a serial-link robot.

Due to the complementary advantages of both methods, there is sometimes a need for conversion between them.
Take calibration for example.
The POE model can naturally avoid the singularity problem that is encountered by the D-H model for robots with adjacent parallel joint axes.
However, most commercial robots are internally modeled with the D-H notation in their controllers.
Therefore, to calibrate the robot, it may be necessary to convert the D-H model to the POE model, calibrate the POE model, and then convert the POE model back to the D-H model for compensation.
In our previous work \cite{wu2017an}, we have shown the equivalence between the two models for robots with revolute and prismatic joints and derived an algorithm for the automatic conversion between them.
In this work, we extend the algorithm to cover helical joints, the third and the last type of one degree of freedom (DoF) lower pair joints.

The reason for extending the algorithm for helical joints is twofold.
First, although revolute and prismatic joints have been the two main joint types for the construction of serial-link robots due to their easiness of actuation and adaptability to general tasks, recent progresses in direct-drive helical motors \cite{smadi2012development} and the trend of using robots for specialized tasks such as screw insertion in spinal interventions \cite{bertelsen2013review} have encouraged the adoption of helical joints in robot design and construction.
Having the algorithm cover helical joints will complete the theory for the three basic one DoF lower pair joints.
Second, due to manufacturing and assembly errors, some revolute or prismatic joints do present a pattern of helical motion \cite{okamura1996kinematic}.
Having the algorithm cover helical joints will enable a calibration model to capture such a characteristic and make the kinematic model more accurate after compensation.

Moreover, we will show that the equivalence between the POE model and the D-H model for the three types of joints can be used for the identifiability analysis of the kinematic parameters of a serial-link robot, which is important in calibration and has aroused some debates recently \cite{li2016poe,chen2014determination,yang2014minimal,he2010kinematic}.
With the help of the conversion between the two models, we will show that the maximum number of identifiable parameters in a POE model is $5h+4r+2t+n+6$ where $h$, $r$, $t$, and $n$ stand for the number of helical, revolute, prismatic, and general joints, respectively. 
We will also show the identifiability of the base frame and the tool frame in the D-H model is restricted rather than the arbitrary six parameters as assumed previously \cite{mooring1991fundamentals}.

\section{The General POE Model}

\subsection{Special Euclidean Group $SE(3)$} \label{sec:special}

The set of rigid body motions forms a Lie group called the \textit{Special Euclidean Group} $SE(3)$, whose element can be represented by a homogeneous transformation
$\bm{H}=\left[\begin{smallmatrix}
\bm{R}&\bm{t}\\
\bm{0}^T&1
\end{smallmatrix}\right] \in SE(3)$. 
Here $\bm{R}$ is a 3D rotation belonging to the \textit{Special Orthogonal Group} $SO(3)$ and $\bm{t} \in \mathbb{R}^3$ is a 3D translation.
According to Chasles's theorem, any rigid-body motion in 3D space can be described by a helical motion, i.e., a rotation about a line together with a translation along the line.
This helical motion can be captured by a twist $\bm{\hat{\xi}}=\left[\begin{smallmatrix}
\bm{\hat{\omega}}& \bm{v}\\
\bm{0}^T&0
\end{smallmatrix}\right] \in se(3)$ or its coordinate form $\bm{\xi}=\left[\begin{smallmatrix}
\bm{\omega}^T & \bm{v}^T
\end{smallmatrix}\right]^T \in \mathbb{R}^6$.
Here $\bm{\hat{\omega}}=\left[\begin{smallmatrix}
0 & -\omega_3 & \omega_2\\
\omega_3 & 0 & -\omega_1\\
-\omega_2 & \omega_1 & 0
\end{smallmatrix}\right] \in so(3)$ or its coordinate form $\bm{\omega}=\left[\begin{smallmatrix}
\omega_1 & \omega_2 & \omega_3
\end{smallmatrix}\right]^T$ relates to the rotation, while $\bm{v}=\left[\begin{smallmatrix}
v_1 & v_2 & v_3
\end{smallmatrix}\right]^T$ is determined by both the rotation and the translation.
$se(3)$ and $so(3)$ are the Lie algebras associated with $SE(3)$ and $SO(3)$, respectively.

Specifically, $\bm{\xi}$ can be written in the form as
\begin{equation}\label{eq:xi}
	\bm{\xi}=\left[\begin{matrix}
	\bm{\omega}\\
	\bm{v}
	\end{matrix}\right]	= \left[\begin{matrix}
	\bm{\bar{\omega}}\\
	\bm{\bar{v}}
	\end{matrix}\right]q = \left[\begin{matrix}
	\bm{\bar{\omega}}\\
	\bm{p}\times \bm{\bar{\omega}}+h\bm{\bar{\omega}}
	\end{matrix}\right]q :=\bm{\bar{\xi}}q 
\end{equation} 
where $\bm{\bar{\xi}}$ is the \textit{normalized twist}. 
If $\bm{\omega}\neq\bm{0}$, then $q=||\bm{\omega}||$ is the rotation angle and $\bm{\bar{\omega}}=\bm{\omega}/q$ is the unit vector along the rotation axis.
$\bm{\bar{v}}=\bm{v}/q$ is composed of two components, $\bm{p}\times \bm{\bar{\omega}}$ and $h\bm{\bar{\omega}}$, in which the first item determines where the rotation axis is as $\bm{p}$ is the 3D position of a point on the rotation axis, and the second item determines the translation distance along the rotation axis as $h$ represents the ratio between the translation and the rotation (we may call it the pitch of the helical motion, although it differs from the real pitch by a factor of $2 \pi$).
A pure rotation corresponds to a zero pitch $h=0$.
If $\bm{\omega} = \bm{0}$, which means the motion is a pure translation, $q=||\bm{v}||$ is the translation distance and $\bm{\bar{v}}=\bm{v}/q$ is the unit vector along the translation direction. 
In summary, a twist $\bm{\xi}$ can represent three types of motions:
\begin{itemize}
	\item for a rotation, $\bm{\omega} \neq \bm{0}$, $h=0$, and $q=||\bm{\omega}||$ is the rotation angle;
	\item for a translation, $\bm{\omega} = \bm{0}$, $h=\infty$, $\bm{v}=h\bm{\omega}\neq \bm{\infty} $, and $q=||\bm{v}||$ is the translation distance; and
	\item for a helical motion, $\bm{\omega} \neq \bm{0}$, $h \neq 0$, $q=||\bm{\omega}||$ is the rotation angle and $hq$ is the translation distance.	
\end{itemize}

A homogeneous transformation in the Lie group $SE(3)$ can be mapped from a twist in the Lie algebra $se(3)$ via an \textit{exponential mapping} $e^{(\cdot)} : se(3) \mapsto SE(3)$, which is given by \cite{selig2005geometric}
\begin{equation}
	\bm{H} = e^{ \bm{\hat{\xi}} }=\bm{I}_4+\bm{\hat{\xi}}+\frac{1}{q^2}(1-\cos q)\bm{\hat{\xi}}^2 + \frac{1}{q^3}(q - \sin q) \bm{\hat{\xi}}^3
\end{equation}
where $q$ is defined as previously.
Simplifications can be made if the type of the motion is known \cite{selig2005geometric}.

One of the good properties of the exponential mapping is the adjunct transformation.
Let $\bm{H} \in SE(3)$ be a homogeneous transformation, the following equation holds as
\begin{equation}
	\bm{H} e^{\bm{\hat{\xi}}} \bm{H}^{-1} = e^{\bm{H} \bm{\hat{\xi}} \bm{H}^{-1}} = e^{\widehat{Ad(\bm{H})\bm{\xi}}} \label{eq:ad}
\end{equation} 
where $Ad(\cdot)$ is called the \textit{adjoint transformation} and given by
\begin{equation}\label{eq:adexpress}
	Ad(\bm{H}) = Ad\left(\left[\begin{matrix}
	\bm{R}&\bm{t}\\
	\bm{0}^T&1
	\end{matrix}\right]\right) = \left[\begin{matrix}
	\bm{R} & \bm{0} \\
	\bm{\hat{t}}\bm{R} & \bm{R}
	\end{matrix}\right]
\end{equation}
where $\bm{\hat{t}} = \left[\begin{smallmatrix}
0 & -t_3 & t_2\\
t_3 & 0 & -t_1\\
-t_2 & t_1 & 0
\end{smallmatrix}\right]$ is the skew-symmetric matrix induced by $\bm{t} = \left[\begin{smallmatrix}
t_1 & t_2 & t_3
\end{smallmatrix}\right]^T$.
This property will be frequently used in the derivation later.

\subsection{General POE model for a serial-link robot}

\begin{figure}[tb]
	\centering
	\includegraphics[scale=.17]{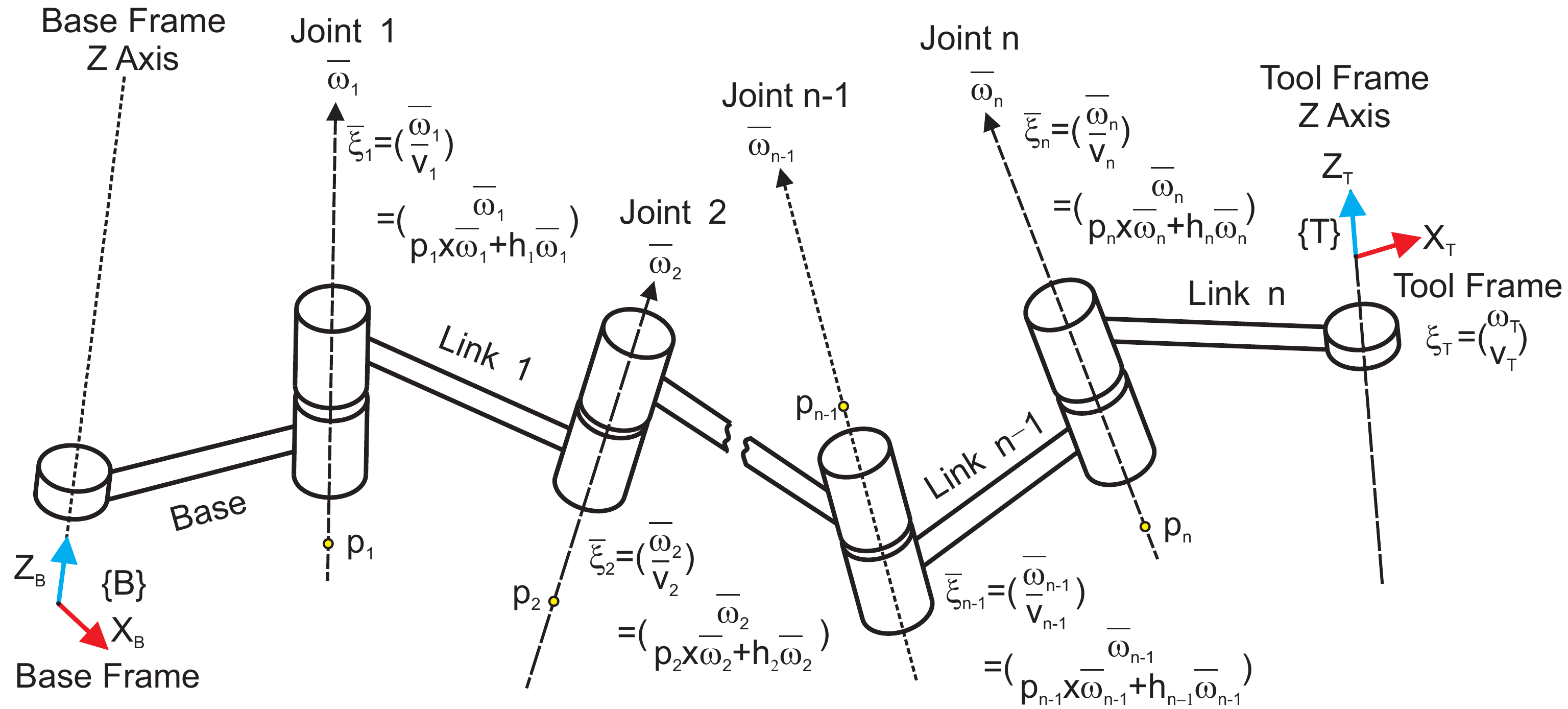}
	\caption{A general POE model for an n-DOF serial-link robot.}
	\label{fig:POEModel}
\end{figure}

Given an $n$-joint serial-link robot, the kinematics can be described by a product of exponentials formula (POE) as
\begin{equation}
	\bm{H} = e^{\bm{\hat{\bar{\xi}}}_1(q_1+\Delta q_1)} e^{\bm{\hat{\bar{\xi}}}_2(q_2+\Delta q_2)} \cdots e^{\bm{\hat{\bar{\xi}}}_n(q_n+\Delta q_n)} e^{\bm{\hat{\xi}}_T}. \label{eq:poe}
\end{equation}
As shown in Fig.~\ref{fig:POEModel}, a base frame \{B\} and a tool frame \{T\} are attached to the base link and the tool link, respectively.
$\Delta q_i$ ($i=1,2,...,n$) is the joint offset, which in most cases is made zero for simplicity.
An initial state is defined when all the joints are in the positions that counteract the offsets, i.e., $q_i = -\Delta q_i$.
In this state, the joint axes are evaluated with respect to the base frame as a series of normalized twists $\bm{\hat{\bar{\xi}}}_i$ ($i=1,2,...,n$), while the transformation from the base frame to the tool frame is described by a general twist $\bm{\hat{\xi}}_T$.

Note that all the three types of joints, namely the revolute joints, the prismatic joints, and the helical joints, can be covered by the general POE model.
The difference between them is only reflected by the difference in each joint twist, as discussed in Section~\ref{sec:special}.

Variations can be made to the POE formula.
For example, if the joint twists are evaluated relative to the tool frame and denoted as $\bm{\hat{\bar{\xi}}}_i^T$, (\ref{eq:poe}) becomes
\begin{equation}
	\bm{H} = e^{\bm{\hat{\xi}}_T} e^{\bm{\hat{\bar{\xi}}}_1^T(q_1+\Delta q_1)} e^{\bm{\hat{\bar{\xi}}}_2^T(q_2+\Delta q_2)}\cdots e^{\bm{\hat{\bar{\xi}}}_n^T(q_n+\Delta q_n)}, 
\end{equation}
which can be referred to as the \textit{tool POE formula} (the original formula (\ref{eq:poe}) can be called the \textit{base POE formula} for contrast).
Or one can establish a local frame on each link, and evaluate the joint twists with respect to each local frame.
Then, (\ref{eq:poe}) changes to the \textit{local POE formula} \cite{chen2001local} as
\begin{equation}
\small
	\bm{H} = \bm{H}_1 e^{\bm{\hat{\bar{\xi}}}_1^{H_1}(q_1+\Delta q_1)} \bm{H}_2 e^{\bm{\hat{\bar{\xi}}}_2^{H_2}(q_2+\Delta q_2)} \cdots \bm{H}_n e^{\bm{\hat{\bar{\xi}}}_n^{H_n}(q_n+\Delta q_n)} \bm{H}_{n+1}
\end{equation}
where $\bm{H}_i$ ($i=1,2,...,n+1$) is the transformation between adjacent local frames.
By using the adjoint transformation, these variations can be proved to be essentially the same as the original model.
Taking the conversion from the local POE formula to the base POE formula for example, we have (omitting $\Delta q_i$ for conciseness)
\begin{eqnarray}
		\bm{H} &=& \bm{H}_1 e^{\bm{\hat{\bar{\xi}}}_1^{H_1}q_1} \bm{H}_2 e^{\bm{\hat{\bar{\xi}}}_2^{H_2}q_2} \cdots \bm{H}_n e^{\bm{\hat{\bar{\xi}}}_n^{H_n}q_n} \bm{H}_{n+1} \nonumber\\
		&\stackrel{(\ref{eq:ad})}{=}& e^{\bm{H}_1\bm{\hat{\bar{\xi}}}_1^{H_1}\bm{H}_1^{-1}q_1}\bm{H}_1\bm{H}_2 e^{\bm{\hat{\bar{\xi}}}_2^{H_2}q_2} \cdots \bm{H}_n e^{\bm{\hat{\bar{\xi}}}_n^{H_n}q_n} \bm{H}_{n+1} \nonumber\\
		&\stackrel{(\ref{eq:ad})}{=}& e^{\widehat{Ad(\bm{H_}1)\bm{\bar{\xi}}}_1^{H_1}q_1}\bm{H}_1\bm{H}_2 e^{\bm{\hat{\bar{\xi}}}_2^{H_2}q_2} \cdots \bm{H}_n e^{\bm{\hat{\bar{\xi}}}_n^{H_n}q_n} \bm{H}_{n+1} \nonumber\\
		&=& e^{\bm{\hat{\bar{\xi}}}_1q_1}\bm{H_1}\bm{H_2} e^{\bm{\hat{\bar{\xi}}}_2^{H_2}q_2} \cdots \bm{H}_n e^{\bm{\hat{\bar{\xi}}}_n^{H_n}q_n} \bm{H}_{n+1} \nonumber\\
		&\vdots&	\nonumber\\		 
		&=& e^{\bm{\hat{\bar{\xi}}}_1q_1} e^{\bm{\hat{\bar{\xi}}}_2q_2} \cdots e^{\bm{\hat{\bar{\xi}}}_nq_n} e^{\bm{\hat{\xi}}_T}.
\end{eqnarray}
Due to the equivalence between these variations, we will just take the base POE formula for analysis in the following parts.

\section{D-H parameterization of the General POE model}

\subsection{D-H parameterization} \label{sec:DH}

\begin{figure}[tb]
	\centering
	
	\subfigure[Revolute joint.]{\label{fig:DHModel_1}\includegraphics[scale=.16]{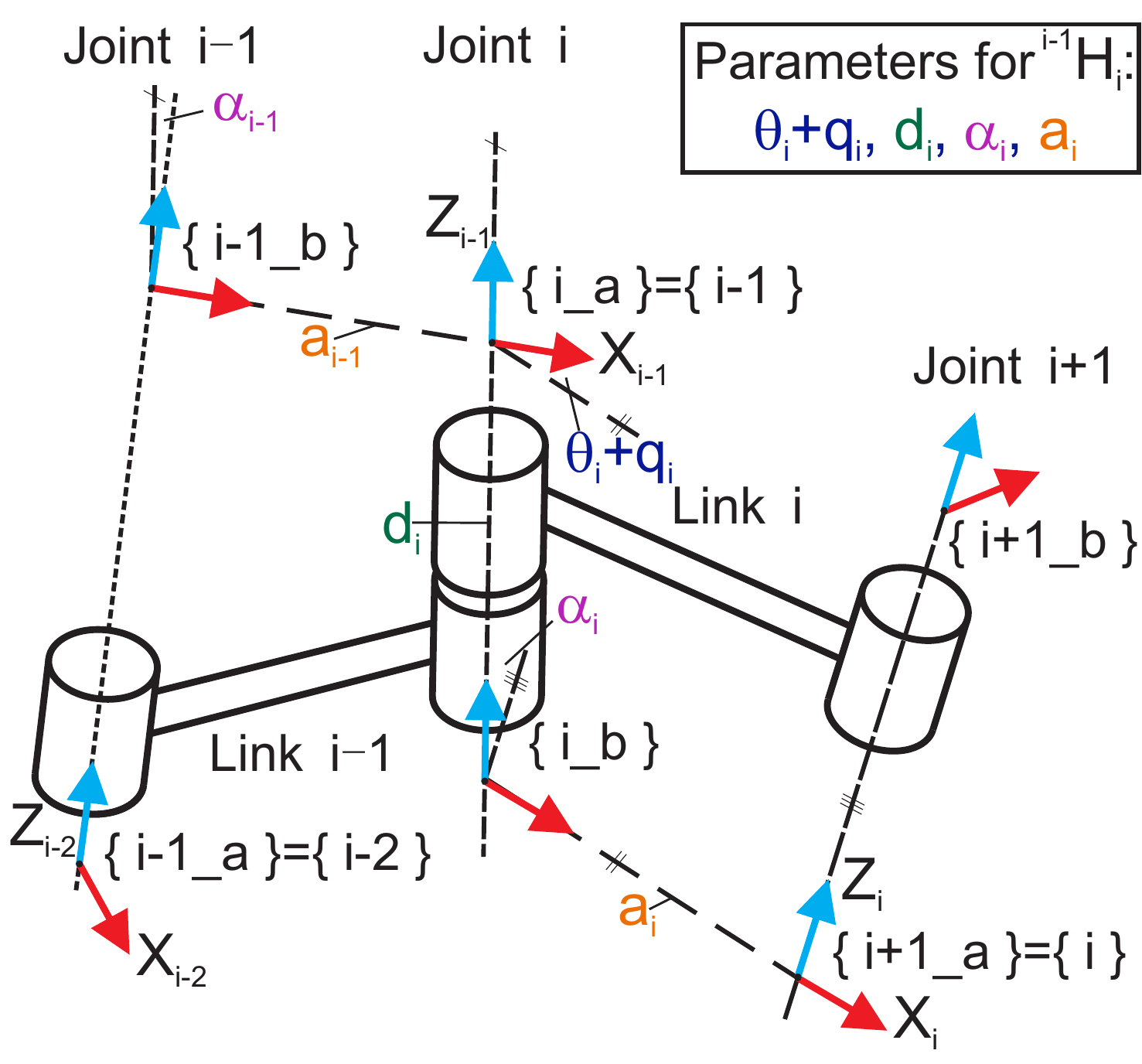}} 
	\subfigure[Prismatic joint.]{\label{fig:DHModel_Prismatic_3}\includegraphics[scale=.16]{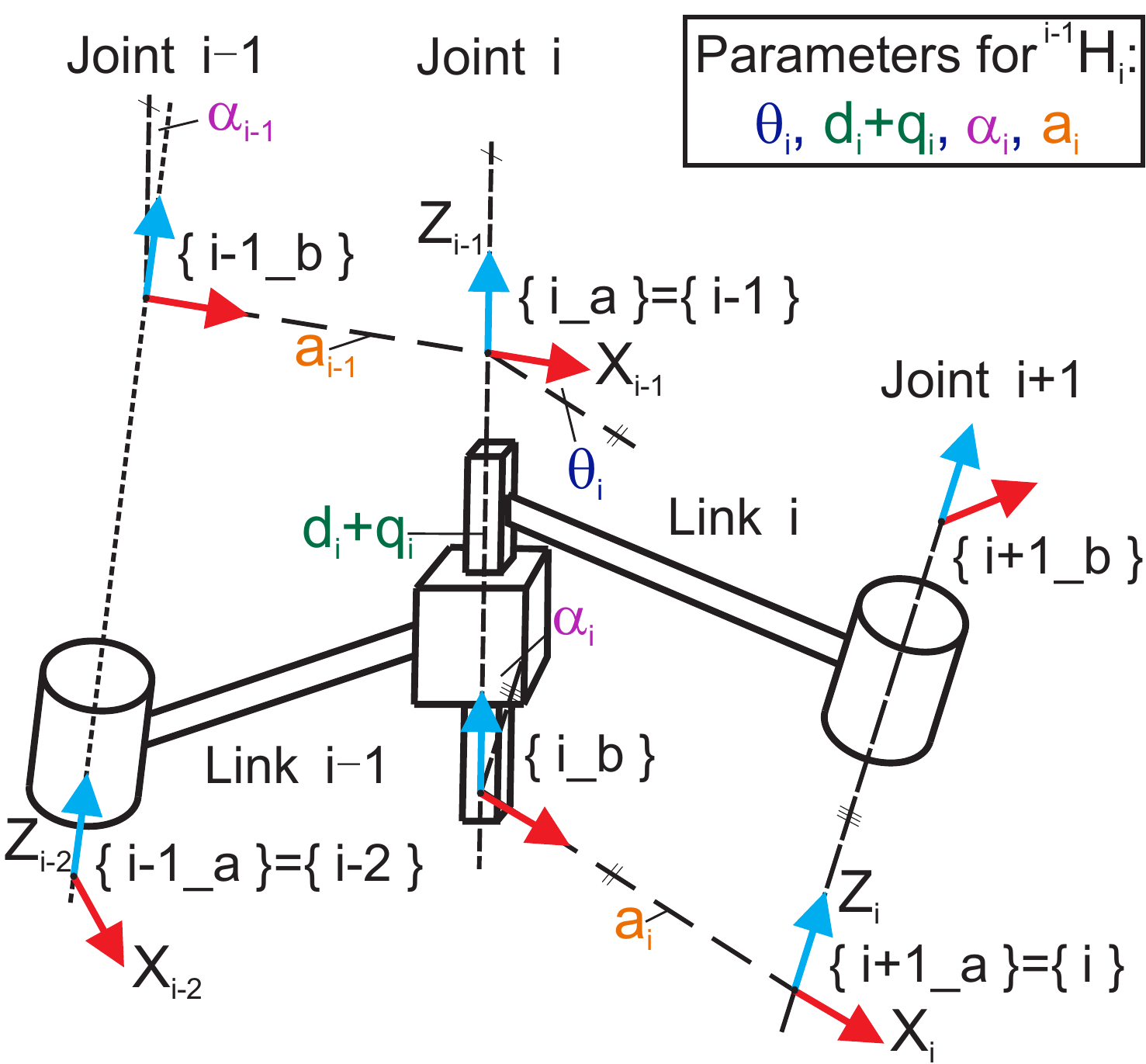}}
	\subfigure[Helical joint.]{\label{fig:DHModel_Helical_1}\includegraphics[scale=.16]{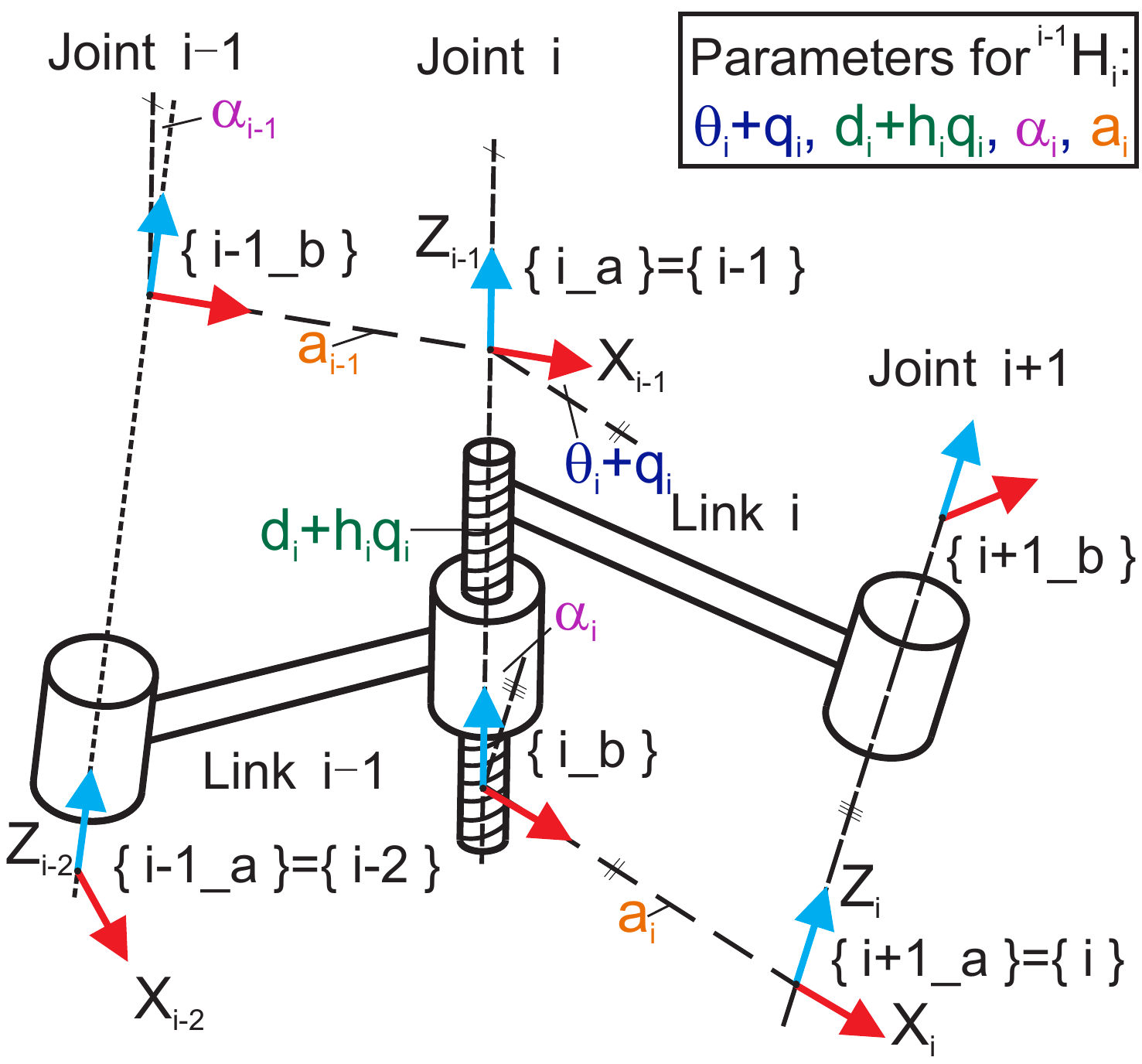}}
	%
	
	
	\caption{Standard D-H parameters for three basic 1-DOF lower pair joints.}
	\label{fig:DHModel}
\end{figure}

\begin{figure}[!tb]
	\centering
	\includegraphics[scale=.17]{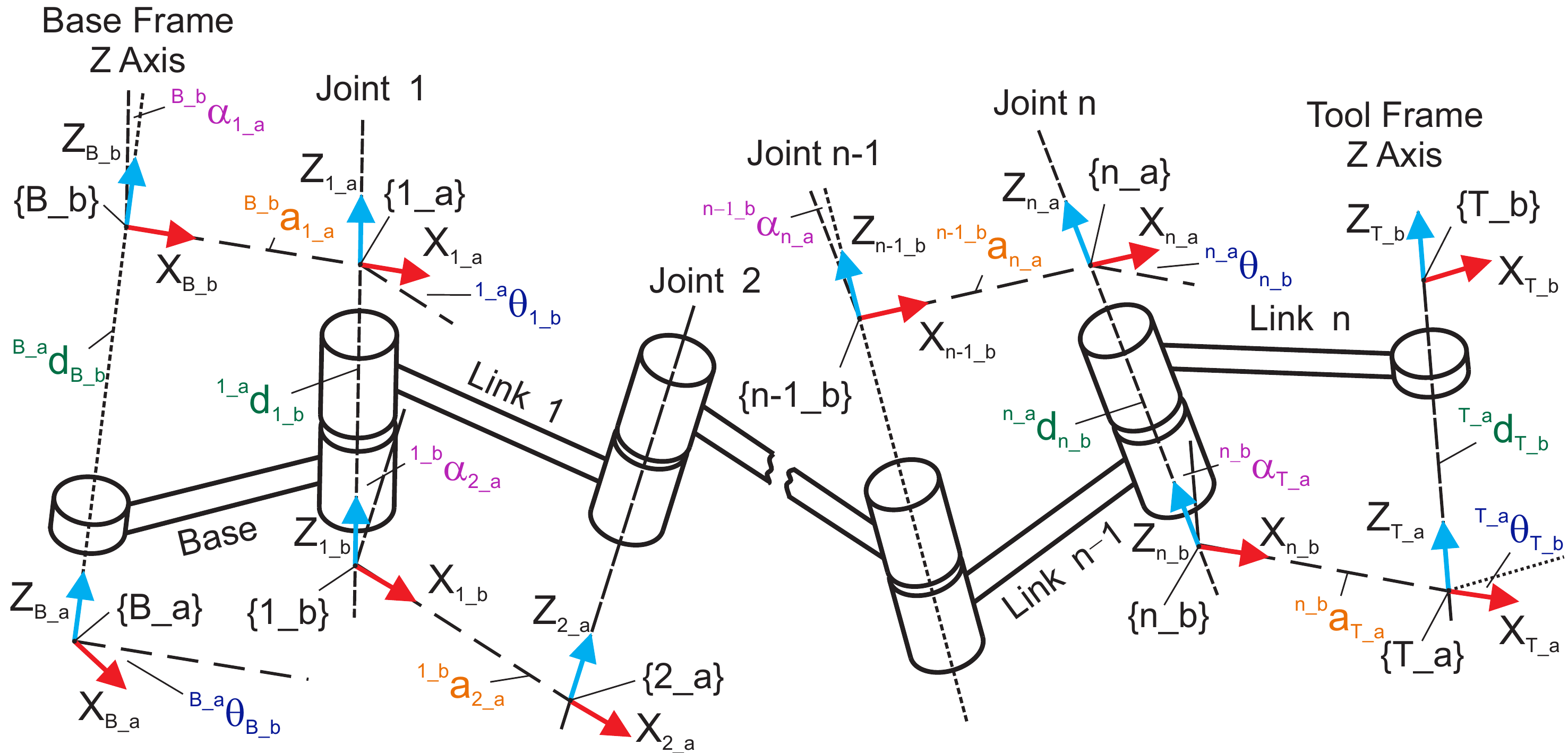}
	\caption{D-H parameters for an n-joint serial-link robot with an arbitrarily located base frame and tool frame.}
	\label{fig:NewDHModel}
\end{figure}

D-H parameterization has been adopted as a standard modeling method by the robotics community as it provides a concise geometric interpretation of the structure of a robot.
The most significant feature of the D-H convention is the use of the common normal between adjacent joint axes, which is a line simultaneously intersecting and perpendicular to the two axes.
As shown in Fig.~\ref{fig:DHModel}, a set of frames are established at the intersections between the joint axes and the common normals.
Four parameters, namely the \textit{joint angle} $\theta$, the \textit{joint offset} $d$, the \textit{link twist} $\alpha$, and the \textit{link length} $a$ are used to define the relative pose between adjacent frames.
It can be verified that the transformation from frame \{ i-1 \} to frame \{ i \} can be constructed by
\begin{equation}
	R_Z(\theta_i+j_iq_i)T_Z(d_i+k_iq_i)R_X(\alpha_i)T_X(a_i)
\end{equation}
where $R_Z(\cdot)$ and $R_X(\cdot)$ mean rotation about $Z$ and $X$, respectively, and $T_Z(\cdot)$ and $T_X(\cdot)$ mean translation along $Z$ and $X$, respectively.
For a revolute joint, $j_i=1$ and $k_i=0$; for a prismatic joint, $j_i=0$ and $k_i=1$; and for a helical joint, $j_i=1$ and $k_i=h_i$.

Note that for some special cases, there could be multiple D-H solutions corresponding to a single configuration.
One such configuration is when adjacent revolute joints have parallel axes, as there are infinite common normals between the axes.
A conventional approach to solve this ambiguity is to let the joint offset $d$ on the first joint be zero for simplicity.
A second such configuration is when a joint is prismatic, as only the direction is meaningful for a translation, while the location of the axis can be anywhere.
In practice, the axis is usually located according to the physical structure of the joint, such as through the center between two edges.

In the POE model, the base frame and the tool frame can be arbitrarily located.
To conform with this rule in the D-H model, the D-H parameterization is applied to the base frame and the tool frame as well, as shown in Fig.~\ref{fig:NewDHModel}.
Under this treatment, the forward kinematics can be expressed as in (\ref{eq:DH}), where $Q_{j_i,k_i}(q_i)=R_Z(j_iq_i)T_Z(k_iq_i)$.
The extraction of $q_i$ holds because of the property that rotations about and translations along the same axis are commutative.

\begin{figure}[!h]
	\footnotesize
	\begin{eqnarray}\label{eq:DH}
	^B\bm{H}_T&=&\underbrace{R_Z(^{B\_a}\theta_{B\_b}) T_Z(^{B\_a}d_{B\_b}) R_X(^{B\_b}\alpha_{1\_a}) T_X(^{B\_b}a_{1\_a})}_{^B\bm{H}_0} Q_{j_1,k_1}(q_1) \nonumber\\
	&&\underbrace{R_Z(^{1\_a}\theta_{1\_b}) T_Z(^{1\_a}d_{1\_b}) R_X(^{1\_b}\alpha_{2\_a}) T_X(^{1\_b}a_{2\_a})}_{^0\bm{H}_1} Q_{j_2,k_2}(q_2) \nonumber\\
	&&\cdots Q_{j_n,k_n}(q_n)\underbrace{R_Z(^{n\_a}\theta_{n\_b}) T_Z(^{n\_a}d_{n\_b}) R_X(^{n\_b}\alpha_{T\_a}) T_X(^{n\_b}a_{T\_a})}_{^{n-1}\bm{H}_n} \nonumber\\ &&\underbrace{R_Z(^{T\_a}\theta_{T\_b}) T_Z(^{T\_a}d_{T\_b})}_{^n\bm{H}_T}.\label{eq:standard_fk}
	\end{eqnarray}
\end{figure}

\subsection{Conversion from general POE Parameters into D-H Parameters} \label{sec:conversion}
To show that a general POE model can be converted into a set of D-H parameters, the following four lemmas are needed.
Among them, Lemma 2-4 have been proved in the authors' previous work\cite{wu2017an}.
Therefore, only the detailed proof of Lemma 1 is given as below.

\begin{lemma}
	For a helical joint, the exponential $e^{\bm{\hat{\bar{\xi}}}q}$ can be converted into the form of $\bm{H}R_Z(q)T_Z(hq)\bm{H^{-1}}$, where $h=\bar{\bm{\omega}}^T\bm{v}$ and  $\bm{H}=R_Z(\theta)T_Z(d)R_X(\alpha)T_X(a)$. 
\end{lemma}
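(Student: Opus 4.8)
The plan is to exhibit the claimed $\bm{H}$ as the rigid motion that carries the $Z$-axis onto the screw axis of the helical joint, and then to invoke the adjoint identity (\ref{eq:ad}). First I would observe that the right-hand factor $R_Z(q)T_Z(hq)$ is itself an exponential: it equals $e^{\bm{\hat{\bar{\xi}}}_0 q}$ for the canonical normalized twist $\bm{\bar{\xi}}_0 = \left[\begin{smallmatrix} \bm{\bar{\omega}}_0^T & h\bm{\bar{\omega}}_0^T \end{smallmatrix}\right]^T$ with $\bm{\bar{\omega}}_0 = \left[\begin{smallmatrix} 0 & 0 & 1 \end{smallmatrix}\right]^T$, i.e.\ a screw of pitch $h$ about the $Z$-axis through the origin (the term $\bm{p}\times\bm{\bar{\omega}}$ of (\ref{eq:xi}) vanishes because this axis passes through the origin, and the translation reduces to $hq$ along $Z$). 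By (\ref{eq:ad}),
\begin{equation*}
\bm{H}\, R_Z(q)\, T_Z(hq)\, \bm{H}^{-1} = \bm{H}\, e^{\bm{\hat{\bar{\xi}}}_0 q}\, \bm{H}^{-1} = e^{\widehat{Ad(\bm{H})\bm{\bar{\xi}}_0}\, q},
\end{equation*}
so the statement reduces to producing a transformation of the prescribed D-H form $\bm{H}=R_Z(\theta)T_Z(d)R_X(\alpha)T_X(a)$ for which $Ad(\bm{H})\bm{\bar{\xi}}_0 = \bm{\bar{\xi}}$.

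Next I would expand this single vector equation with the explicit block form (\ref{eq:adexpress}). Writing $\bm{H}=\left[\begin{smallmatrix}\bm{R}&\bm{t}\\\bm{0}^T&1\end{smallmatrix}\right]$, the angular part demands $\bm{R}\bm{\bar{\omega}}_0 = \bm{\bar{\omega}}$, i.e.\ the third column of $\bm{R}$ is the joint's axis direction $\bm{\bar{\omega}}$, while the linear part demands $\bm{\hat{t}}\bm{R}\bm{\bar{\omega}}_0 + h\bm{R}\bm{\bar{\omega}}_0 = \bm{p}\times\bm{\bar{\omega}} + h\bm{\bar{\omega}}$. Once the angular condition holds, the pitch terms $h\bm{R}\bm{\bar{\omega}}_0 = h\bm{\bar{\omega}}$ cancel automatically, leaving $\bm{t}\times\bm{\bar{\omega}} = \bm{p}\times\bm{\bar{\omega}}$, i.e.\ $(\bm{t}-\bm{p})\times\bm{\bar{\omega}}=\bm{0}$. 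Geometrically these two conditions say exactly that $\bm{H}$ maps the $Z$-axis onto the screw axis with the correct orientation, while the common pitch is preserved for free because conjugation by a rigid motion is a screw invariant; this is consistent with the stated $h=\bm{\omega}^T\bm{v}$, since dotting $\bm{\bar{v}}=\bm{p}\times\bm{\bar{\omega}}+h\bm{\bar{\omega}}$ with $\bm{\bar{\omega}}$ gives $h$ using $\bm{\bar{\omega}}^T(\bm{p}\times\bm{\bar{\omega}})=0$ and $\|\bm{\bar{\omega}}\|=1$.

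It then remains to verify that the four D-H parameters can always meet these two (two-dimensional) conditions, and this solvability is the crux of the argument. I would compute the third column of $\bm{R}=R_Z(\theta)R_X(\alpha)$ explicitly, namely $\left[\begin{smallmatrix}\sin\theta\sin\alpha & -\cos\theta\sin\alpha & \cos\alpha\end{smallmatrix}\right]^T$; matching it to $\bm{\bar{\omega}}$ fixes $\alpha$ from $\bar{\omega}_3=\cos\alpha$ and then $\theta$ from the remaining two components. Likewise the translation of $\bm{H}$ is $\bm{t}=\left[\begin{smallmatrix}a\cos\theta & a\sin\theta & d\end{smallmatrix}\right]^T$, which, with $\theta$ fixed, sweeps out the plane spanned by the rotated $X$-axis and the $Z$-axis as $a,d$ vary; the remaining condition $(\bm{t}-\bm{p})\times\bm{\bar{\omega}}=\bm{0}$ is solved by intersecting this plane with the screw axis, a unique intersection whenever the axis is not parallel to the plane. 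The main obstacle I anticipate is handling this last solvability cleanly, in particular the degenerate configuration $\sin\alpha=0$ (joint axis parallel to $Z$), where $\theta$ is free and $d$ is irrelevant; there I would argue that the pair $(a,\theta)$ still suffices to place $\bm{t}$ on the vertical axis, so a valid choice of $(\theta,d,\alpha,a)$ exists in every case, completing the conversion.
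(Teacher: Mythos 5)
Your proposal is correct and takes essentially the same route as the paper: both rewrite $R_Z(q)T_Z(hq)$ as the exponential of the canonical pitch-$h$ twist about $Z$, invoke the adjoint identity (\ref{eq:ad}) to reduce the lemma to solving $Ad(\bm{H})\bm{\bar{\xi}}^\prime=\bm{\bar{\xi}}$ for the D-H parameters, and split into the same two cases ($\omega_3=\pm 1$ versus $\omega_3\neq\pm 1$). The only substantive difference is that you establish solvability geometrically (intersecting the screw axis with the plane swept by $\bm{t}=\left[\begin{smallmatrix}a\cos\theta & a\sin\theta & d\end{smallmatrix}\right]^T$), whereas the paper writes down explicit closed-form values of $(\theta,d,\alpha,a,h)$, which is what makes its conversion algorithm constructive.
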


\begin{proof}
	It can be verified that $R_Z(q)T_Z(hq)$ can be written as $e^{\hat{\bar{\bm{\xi}}}^\prime q}$ where $\bar{\bm{\xi}}^\prime = \left[\begin{smallmatrix}
	0 & 0 & 1 & 0 & 0 & h
	\end{smallmatrix}\right]^T$. 
	According to (\ref{eq:ad}), we have
	\begin{equation}
	\bm{H}R_Z(q)T_Z(hq)\bm{H^{-1}} = \bm{H}e^{\hat{\bar{\bm{\xi}}}^\prime q}\bm{H^{-1}} = e^{\bm{H}\hat{\bar{\bm{\xi}}}^\prime\bm{H^{-1}} q} = e^{\widehat{Ad(\bm{H})\bm{\bar{\xi}}^\prime}}q.
	\end{equation}
	
	Then, proving the lemma is equivalent to proving
	\begin{equation} \label{eq:xiAd}
	\bm{\bar{\xi}}=Ad(\bm{H})\bm{\bar{\xi}}^\prime.
	\end{equation} 
	
	Assume $\bm{H}$ can be represented by $R_Z(\theta)T_Z(d)R_X(\alpha)T_X(a)$.
	Expanding $\bm{H}$ and substituting (\ref{eq:adexpress}) into (\ref{eq:xiAd}), we have
	\begin{equation}
	\small
	\bm{\bar{\omega}}\coloneqq\left[\begin{matrix}\label{eq:omega}
	\omega_1\\
	\omega_2\\
	\omega_3
	\end{matrix}\right]=\left[\begin{matrix}
	\sin(\theta)\sin(\alpha)\\
	-\cos(\theta)\sin(\alpha)\\
	\cos(\alpha)
	\end{matrix}\right]
	\end{equation}
	\begin{equation}
	\small
	\bm{v}\coloneqq\left[\begin{matrix}\label{eq:v}
	v_1\\
	v_2\\
	v_3
	\end{matrix}\right]=\left[\begin{matrix}
	\omega_3\sin(\theta)						&	-\omega_2 & \omega_1\\
	-\omega_3\cos(\theta)						&	\omega_1  & \omega_2\\
	\omega_2\cos(\theta)-\omega_1\sin(\theta)	&	0 		  & \omega_3
	\end{matrix}\right]\left[\begin{matrix}
	a\\
	d\\
	h
	\end{matrix}\right].
	\end{equation}
	
	Then, we just need to prove that solutions to (\ref{eq:omega}) and (\ref{eq:v}) always can be found. 
	To this end, two possible cases of $\omega_3$ are examined:
	
	1) $\underline{\omega_3= \pm 1}$ This means $\omega_1=\omega_2=0$ since $\bm{\bar{\omega}}$ is a unit vector.
	It can be verified that ($\theta=\text{atan2}(v_1/\omega_3,-v_2/\omega_3)$, $d=0$, $\alpha=\arccos(\omega_3)$, $a=\sqrt{v_1^2+v_2^2}$,  $h=\bm{\bar{\omega}}^T\bm{v}$) is a solution to (\ref{eq:omega}) and (\ref{eq:v}). 
	Note that $d$ can be actually any value as this corresponds to the configuration with adjacent parallel axes; it is set to be zero for simplicity as discussed in Section~\ref{sec:DH}.
	
	
	2) $\underline{\omega_3\neq \pm 1}$ According to (\ref{eq:omega}), we have
	\begin{eqnarray}
	&\alpha=\pm\arccos(\omega_3) \label{eq:alpha} \\ 
	&\theta=\text{atan2}(\omega_1/\sin(\alpha),-\omega_2/\sin(\alpha)).\label{eq:theta}
	\end{eqnarray}
	Substituting the first two rows of (\ref{eq:omega}) into the last row of (\ref{eq:v}), we have
	\begin{equation}
	a\sin(\alpha)=h\omega_3-v_3.\label{eq:asinalpha}
	\end{equation}
	As the link length $a$ is usually assumed to be nonnegative, the polarity of the right side of (\ref{eq:asinalpha}) can be used to disambiguate the sign of $\alpha$ in (\ref{eq:alpha}).
	
	Then, it can be verified that ($a = \frac{\bm{\bar{\omega}}^T\bm{v}\omega_3-v_3}{\sin(\alpha)}$, $d =\frac{\omega_1v_2-\omega_2v_1}{\omega_1^2+\omega_2^2}$, $h=\bm{\bar{\omega}}^T\bm{v}$)
	is the solution to (\ref{eq:v}).

	In summary, solutions to (\ref{eq:omega}) and (\ref{eq:v}) can always be found and thus the lemma is proved.
\end{proof}

\begin{lemma}
	For a revolute joint, the exponential $e^{\hat{\bm{\bar{\xi}}}q}$ can be converted into the form of $\bm{H}R_Z(q)\bm{H^{-1}}$, where $\bm{H}=R_Z(\theta)T_Z(d)R_X(\alpha)T_X(a)$.
\end{lemma}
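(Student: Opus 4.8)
The plan is to treat a revolute joint as the zero-pitch specialization of the helical joint already handled in Lemma~1, so the argument runs parallel to that proof. First I would note that a pure rotation about the $Z$-axis is itself an exponential: $R_Z(q) = e^{\hat{\bar{\bm{\xi}}}^\prime q}$ with $\bar{\bm{\xi}}^\prime = \left[\begin{smallmatrix} 0 & 0 & 1 & 0 & 0 & 0 \end{smallmatrix}\right]^T$, i.e.\ the same generator as in Lemma~1 but with the pitch set to $h=0$, so that the $T_Z(hq)$ factor there collapses to the identity and only $R_Z(q)$ remains. Applying the adjoint identity (\ref{eq:ad}) then gives $\bm{H}R_Z(q)\bm{H}^{-1} = e^{\widehat{Ad(\bm{H})\bar{\bm{\xi}}^\prime}q}$, so proving the lemma reduces to solving the algebraic identity $\bar{\bm{\xi}} = Ad(\bm{H})\bar{\bm{\xi}}^\prime$ for the four D-H parameters $\theta$, $d$, $\alpha$, and $a$.

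Next I would expand $\bm{H}=R_Z(\theta)T_Z(d)R_X(\alpha)T_X(a)$ and substitute (\ref{eq:adexpress}), which yields exactly (\ref{eq:omega}) for the rotational part $\bar{\bm{\omega}}$, while the translational part reduces to (\ref{eq:v}) with the third (the $h$-) column of the coefficient matrix deleted, leaving only the unknowns $a$ and $d$. The structural fact that makes $h=0$ consistent is that a revolute twist has vanishing pitch: since $\bm{v}=\bm{p}\times\bar{\bm{\omega}}$, we automatically have $\bar{\bm{\omega}}^T\bm{v}=0$, which is precisely the condition that the helical solution of Lemma~1 enforces when $h=\bar{\bm{\omega}}^T\bm{v}$ is required to vanish.

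Finally I would carry out the same case split on $\omega_3$ as in Lemma~1. For $\omega_3\neq\pm1$ the helical solution specializes directly, with $\alpha=\pm\arccos(\omega_3)$ and $\theta=\text{atan2}(\omega_1/\sin(\alpha),-\omega_2/\sin(\alpha))$ unchanged, $d=(\omega_1 v_2-\omega_2 v_1)/(\omega_1^2+\omega_2^2)$, and $a=-v_3/\sin(\alpha)$ (the $h\omega_3$ term in (\ref{eq:asinalpha}) now vanishing), the sign of $\alpha$ being fixed by the requirement $a\geq0$. The main obstacle, just as in Lemma~1, is the degenerate case $\omega_3=\pm1$, where the $Z$-axis is parallel to the joint axis: here $\omega_1=\omega_2=0$, so $\theta$ and $d$ are no longer separately determined by (\ref{eq:omega})--(\ref{eq:v}), and I would invoke the parallel-axis convention of Section~\ref{sec:DH} to set $d=0$ and recover $a=\sqrt{v_1^2+v_2^2}$ and $\theta=\text{atan2}(v_1/\omega_3,-v_2/\omega_3)$. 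Verifying that these choices satisfy the reduced (\ref{eq:v}) then completes the proof.
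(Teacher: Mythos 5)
Your proof is correct and follows essentially the same route as the paper: the paper defers Lemma~2 to prior work but proves Lemma~1 in full, and your argument is exactly that proof specialized to zero pitch, with the reduction to $\bm{\bar{\xi}}=Ad(\bm{H})\bm{\bar{\xi}}^\prime$, the same case split on $\omega_3=\pm1$ versus $\omega_3\neq\pm1$, and the correct observation that $\bm{\bar{\omega}}^T\bm{v}=0$ guarantees consistency of the reduced system in $a$ and $d$. All the specialized formulas ($a=-v_3/\sin\alpha$, $d=(\omega_1v_2-\omega_2v_1)/(\omega_1^2+\omega_2^2)$, and the degenerate-case solution with $d=0$) check out.
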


\begin{lemma}
	For a prismatic joint, the exponential $e^{\hat{\bm{\bar{\xi}}}q}$ can be converted into the form of $\bm{H}T_Z(q)\bm{H^{-1}}$, where $\bm{H}=R_Z(\theta)T_Z(d)R_X(\alpha)T_X(a)$.
\end{lemma}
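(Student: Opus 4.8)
The plan is to mirror the argument used for Lemma~1, again reducing the claim to an algebraic matching of twist coordinates via the adjoint property~(\ref{eq:ad}). First I would note that the pure translation $T_Z(q)$ is itself an exponential, $T_Z(q)=e^{\hat{\bar{\bm{\xi}}}^\prime q}$ with the canonical unit translation twist $\bar{\bm{\xi}}^\prime=\left[\begin{smallmatrix}0&0&0&0&0&1\end{smallmatrix}\right]^T$. Applying~(\ref{eq:ad}) then gives $\bm{H}T_Z(q)\bm{H}^{-1}=e^{\widehat{Ad(\bm{H})\bar{\bm{\xi}}^\prime}q}$, so that proving the lemma becomes equivalent to showing the four parameters $\theta,d,\alpha,a$ can be chosen to satisfy $\bar{\bm{\xi}}=Ad(\bm{H})\bar{\bm{\xi}}^\prime$.

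Next I would expand the right-hand side using the block form~(\ref{eq:adexpress}). Because $\bar{\bm{\xi}}^\prime$ has vanishing angular part, the product collapses to $Ad(\bm{H})\bar{\bm{\xi}}^\prime=\left[\begin{smallmatrix}\bm{0}\\\bm{R}\,\bm{u}\end{smallmatrix}\right]$, where $\bm{u}=\left[\begin{smallmatrix}0&0&1\end{smallmatrix}\right]^T$ and $\bm{R}=R_Z(\theta)R_X(\alpha)$ is the rotation part of $\bm{H}$. The angular coordinate thus matches automatically, consistent with a prismatic joint having $\bm{\bar{\omega}}=\bm{0}$, and the sole remaining condition is $\bm{\bar{v}}=\bm{R}\,\bm{u}$: the given unit translation direction must equal the third column of $\bm{R}$.

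That third column of $R_Z(\theta)R_X(\alpha)$ is $\left[\begin{smallmatrix}\sin\theta\sin\alpha&-\cos\theta\sin\alpha&\cos\alpha\end{smallmatrix}\right]^T$, which is exactly the vector already solved for in~(\ref{eq:omega}) during the proof of Lemma~1. I would therefore read off a solution in the same manner: set $\alpha=\arccos(\bar{v}_3)$ and, when $\sin\alpha\neq0$, $\theta=\text{atan2}(\bar{v}_1/\sin\alpha,\,-\bar{v}_2/\sin\alpha)$. The essential structural simplification relative to Lemma~1 is that $d$ and $a$ do not enter this equation at all and may be assigned freely; this reflects precisely the axis-location ambiguity of prismatic joints noted in Section~\ref{sec:DH}, where the axis is fixed according to the physical structure of the joint.

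The only step demanding care, and the main (though minor) obstacle, is the degenerate case $\bar{v}_3=\pm1$, i.e.\ translation along the base $Z$-axis, where $\sin\alpha=0$ renders $\theta$ indeterminate; here any $\theta$ is admissible and one simply selects a convenient value. Since a valid parameter choice therefore exists in every case, the identity $\bar{\bm{\xi}}=Ad(\bm{H})\bar{\bm{\xi}}^\prime$ holds and the lemma follows.
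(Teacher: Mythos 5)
Your proof is correct and takes essentially the same route as the paper: the paper defers this lemma's proof to the authors' earlier work \cite{wu2017an}, but its own proof of Lemma 1 uses exactly this adjoint reduction ($e^{\hat{\bm{\bar{\xi}}}q}=\bm{H}e^{\hat{\bar{\bm{\xi}}}^\prime q}\bm{H}^{-1}$ reduced to solving $\bar{\bm{\xi}}=Ad(\bm{H})\bar{\bm{\xi}}^\prime$), and your specialization to the zero-angular-part twist is the intended argument. Your handling of the two genuine subtleties, the freedom in $d$ and $a$ (the prismatic axis-location ambiguity noted in Section~\ref{sec:DH}) and the degenerate case $\bar{v}_3=\pm1$ where $\theta$ is arbitrary, is also sound, so there are no gaps.
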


\begin{lemma}
	For an arbitrary twist $\bm{\hat{\xi}}$, the exponential $e^{\bm{\hat{\xi}}}$ can be decomposed into a product $e^{\bm{\hat{\xi}}}=\bm{H}_1\bm{H}_2$, where $\bm{H}_1=R_Z(\theta_1)T_Z(d_1)R_X(\alpha_1)T_X(a_1)$ and $\bm{H}_2=R_Z(\theta_2)T_Z(d_2)$.
\end{lemma}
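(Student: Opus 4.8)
The plan is to observe that $e^{\bm{\hat{\xi}}}$ is simply an arbitrary element of $SE(3)$, which I write as $\bm{H}=\left[\begin{smallmatrix}\bm{R}&\bm{t}\\\bm{0}^T&1\end{smallmatrix}\right]$ with $\bm{R}\in SO(3)$ and $\bm{t}\in\mathbb{R}^3$. The proposed factor $\bm{H}_1\bm{H}_2$ carries exactly the six free parameters $(\theta_1,d_1,\alpha_1,a_1,\theta_2,d_2)$, matching $\dim SE(3)=6$, so the natural strategy is to decouple the problem into matching the rotation block (using three parameters) and then the translation block (using the remaining three). Accordingly I would reduce the lemma to showing that the six parameters can always be chosen so that the rotation and translation of $\bm{H}_1\bm{H}_2$ coincide with $\bm{R}$ and $\bm{t}$.

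For the rotation, note that $T_Z$ and $T_X$ leave the rotation block untouched, so the rotation of $\bm{H}_1\bm{H}_2$ is $R_Z(\theta_1)R_X(\alpha_1)R_Z(\theta_2)$, which is precisely the classical $Z$--$X$--$Z$ Euler-angle form; since every $\bm{R}\in SO(3)$ admits such a factorization, I read off $(\theta_1,\alpha_1,\theta_2)$ from the standard Euler extraction applied to $\bm{R}$. With $\theta_1$ and $\alpha_1$ thereby fixed, the translation of $\bm{H}_1\bm{H}_2$ equals $\bm{R}_1(0,0,d_2)^T+\bm{t}_1$, where $\bm{R}_1=R_Z(\theta_1)R_X(\alpha_1)$ has third column $(\sin\theta_1\sin\alpha_1,-\cos\theta_1\sin\alpha_1,\cos\alpha_1)^T$ and $\bm{t}_1=(a_1\cos\theta_1,a_1\sin\theta_1,d_1)^T$ is the translation of $\bm{H}_1$. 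Setting this equal to $\bm{t}$ produces a system that is linear in the remaining unknowns $(a_1,d_1,d_2)$: projecting the first two rows onto $(\cos\theta_1,\sin\theta_1)$ and its orthogonal direction decouples them, giving $a_1=t_1\cos\theta_1+t_2\sin\theta_1$ and $d_2\sin\alpha_1=t_1\sin\theta_1-t_2\cos\theta_1$, and the third row then yields $d_1=t_3-d_2\cos\alpha_1$. When $\sin\alpha_1\neq0$ this determines all three parameters uniquely, completing the generic case.

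The only obstruction, which I expect to be the main difficulty, is the degenerate branch $\sin\alpha_1=0$ --- the gimbal-lock configuration in which $\bm{R}$ reduces to a pure $Z$-rotation and the pairs $(\theta_1,\theta_2)$ and $(d_1,d_2)$ are each individually under-determined while only their sums are fixed. This parallels the separately-handled $\omega_3=\pm1$ branch of Lemma~1, and I would resolve it by fixing the freedom explicitly: set $d_2=0$ so that $d_1=t_3$, choose $\theta_1=\text{atan2}(t_2,t_1)$ and $a_1=\sqrt{t_1^2+t_2^2}$ to absorb the planar translation, and let $\theta_2$ take up the residual rotation angle. Verifying that this choice reproduces $\bm{H}$ in the degenerate case, together with confirming that the translation equations are genuinely linear once the Euler angles are frozen, constitutes the bulk of the bookkeeping; everything else follows from the explicit formulas above, so the lemma holds for every twist $\bm{\hat{\xi}}$.
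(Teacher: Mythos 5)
Your proof is correct. Note that the paper itself does not prove this lemma in the present document --- it defers Lemmas 2--4 to the authors' previous work \cite{wu2017an} and only proves Lemma 1 in detail --- so the comparison here is against that cited construction and against the style of the Lemma 1 proof. Your route is the natural direct one and matches that style closely: factor the rotation block as the $Z$--$X$--$Z$ Euler product $R_Z(\theta_1)R_X(\alpha_1)R_Z(\theta_2)$, observe that the residual translation equations are linear in $(a_1,d_1,d_2)$ once the angles are frozen, solve them explicitly, and split off the singular branch --- exactly mirroring the $\omega_3=\pm 1$ versus $\omega_3\neq\pm 1$ case split in the paper's Lemma 1. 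Your explicit formulas check out: with $\bm{R}_1 = R_Z(\theta_1)R_X(\alpha_1)$ having third column $(\sin\theta_1\sin\alpha_1,\,-\cos\theta_1\sin\alpha_1,\,\cos\alpha_1)^T$, the projections indeed give $a_1=t_1\cos\theta_1+t_2\sin\theta_1$, $d_2\sin\alpha_1=t_1\sin\theta_1-t_2\cos\theta_1$, and $d_1=t_3-d_2\cos\alpha_1$. One small imprecision: the degenerate branch $\sin\alpha_1=0$ is not only the case where $\bm{R}$ is a pure $Z$-rotation ($R_{33}=+1$, $\alpha_1=0$); it also includes $R_{33}=-1$ ($\alpha_1=\pi$), where $\bm{R}=R_Z(\psi)R_X(\pi)$. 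Your fix ($d_2=0$, $\theta_1=\mathrm{atan2}(t_2,t_1)$, $a_1=\sqrt{t_1^2+t_2^2}$, $d_1=t_3$, with $\theta_2$ absorbing the residual angle) works verbatim in both sub-branches, since the third column of $\bm{R}_1$ is $(0,0,\pm 1)^T$ either way, so this is a wording issue rather than a gap.
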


Having these four lemmas, we are able to convert a general POE formula (\ref{eq:poe}) into the D-H parameterization (\ref{eq:DH}).
The process is shown in (\ref{eq:BHT}), with the subscriptions of $Q(\cdot)$ and $\Delta q_i$ omitted for conciseness.
Note that, however, $\Delta q_i$ can be split from $Q(q_i+\Delta q_i)$ and merged into $^{i-1}H_i$.
If joint $i$ is revolute or helical, $\Delta q_i$ can be added to $^{i\_a}\theta_{i\_b}$;
if joint $i$ is prismatic, $\Delta q_i$ can be added to $^{i\_a}d_{i\_b}$.

\begin{figure}[!h]
\scriptsize
\begin{eqnarray}
^B\bm{H}_T&=&e^{\bm{\hat{\bar{\xi}}}_1q_1}e^{\bm{\hat{\bar{\xi}}}_2q_2}\cdots e^{\bm{\hat{\bar{\xi}}}_nq_n} e^{\bm{\hat{\xi}}_T}	\nonumber\\
&\stackrel{\text{Lemma 1-3}}{=}&^B\bm{H}_0 Q(q_1) ^B\bm{H}_0^{-1} e^{\bm{\hat{\bar{\xi}}}_2q_2}\cdots e^{\bm{\hat{\bar{\xi}}}_nq_n} e^{\bm{\hat{\xi}}_T} \nonumber\\
&\stackrel{(\ref{eq:ad})}{=}&^B\bm{H}_0 Q(q_1) e^{^B\bm{H}_0^{-1} \bm{\hat{\bar{\xi}}}_2 {^B\bm{H}_0} {q_2}} {^B\bm{H}_0^{-1}}\cdots e^{\bm{\hat{\bar{\xi}}}_nq_n} e^{\bm{\hat{\xi}}_T} \nonumber\\
&\stackrel{(\ref{eq:ad})}{=}&^B\bm{H}_0 Q(q_1) e^{\bm{\hat{\bar{\xi}}}_2^\prime q_2}{^B\bm{H}_0^{-1}}\cdots e^{\bm{\hat{\bar{\xi}}}_nq_n} e^{\bm{\hat{\xi}}_T}	\nonumber\\
&\stackrel{\text{Lemma 1-3}}{=}&{^B\bm{H}_0} Q(q_1) {^0\bm{H}_1} Q(q_2) {^0\bm{H}_1^{-1}} {^B\bm{H}_0^{-1}} \cdots e^{\bm{\hat{\bar{\xi}}}_nq_n} e^{\bm{\hat{\xi}}_T} \nonumber\\
&\vdots&	\nonumber\\
&=&{^B\bm{H}_0 Q(q_1)} {^0\bm{H}_1} Q(q_2) \cdots {^{n-2}\bm{H}_{n-1}} Q(q_n) {^{n-2}\bm{H}_{n-1}^{-1}} \nonumber\\
&\quad&\quad\quad\cdots {^0\bm{H}_1^{-1}} {^B\bm{H}_0^{-1}} e^{\bm{\hat{\xi}}_T}	\nonumber\\
&=&{^B\bm{H}_0 Q(q_1)} {^0\bm{H}_1} Q(q_2) \cdots {^{n-2}\bm{H}_{n-1}} Q(q_n) e^{\bm{\hat{\xi}}_T^\prime}	\nonumber\\
&\stackrel{\text{Lemma 4}}{=}& {^B\bm{H}_0 Q(q_1)} {^0\bm{H}_1} Q(q_2) \cdots {^{n-2}\bm{H}_{n-1}} Q(q_n) {^{n-1}\bm{H}_{n}} {^{n}\bm{H}_{T}}
\label{eq:BHT}
\end{eqnarray}
\end{figure}

\section{Algorithm Validation} \label{sec:validation}

We validate the proposed method through a PUMA 560 robot whose nominal and actual POE parameters were given in \cite{okamura1996kinematic,he2010kinematic}.
As listed in Table~\ref{tab:POE}, the actual parameters differ a little bit from the nominal ones due to errors in manufacturing, assembly, etc.
By using the algorithm developed in Section~\ref{sec:conversion} (an implementation of the algorithm in MATLAB can be found in the supplementary files), the D-H parameterization from both the nominal and the actual POE parameters is obtained and shown in Table~\ref{tab:DH}.
The parameters that changed significantly between the nominal and the actual are marked to facilitate the discussion in Section~\ref{sec:discussion}.

Note that, however, the actual POE parameters given in \cite{okamura1996kinematic,he2010kinematic} are not normalized.
Therefore, a normalization is performed before the conversion happens, and is given by $\bm{\xi} = \bm{\bar{\xi}}\bar{q}$ where $\bm{\xi}$ is the original twist, $\bm{\bar{\xi}}$ is the normalized twist, and $\bar{q}$ is the normalization factor defined similarly as $q$ in (\ref{eq:xi}).

\begin{table}[!tbp]
	\caption{Parameters of the PUMA 560 robot.}
	\label{tab:POE}
	\centering
	\footnotesize
	\begin{tabular}{cll}
		\toprule
		& \multicolumn{1}{c}{Nominal} & \multicolumn{1}{c}{Actual} \\
		\midrule
		${{\pmb{\xi }}_1}$ & ${\left[ {\begin{smallmatrix}
				{0}&{0}&{1}&{0}&{0}&{0}
				\end{smallmatrix}} \right]^T}$& ${\left[ {\begin{smallmatrix}
				0.04&-0.02&0.999&0.02&0.04&0
				\end{smallmatrix}} \right]^T}$  \\
		${{\pmb{\xi }}_2}$ & ${\left[ {\begin{smallmatrix}
				0&{-1}&0&{ 0}&0&{0}
				\end{smallmatrix}} \right]^T}$& ${\left[ {\begin{smallmatrix}
				0&-1.00002&0&-0.02&0&0.05
				\end{smallmatrix}} \right]^T}$  \\
		${{\pmb{\xi }}_3}$ & ${\left[ {\begin{smallmatrix}
				0 & -1 & 0 & 0 & 0 & -100
				\end{smallmatrix}} \right]^T}$ & ${\left[ {\begin{smallmatrix}
				0.178&-0.984&-0.001&-0.07&0.009&{ - 101}
				\end{smallmatrix}} \right]^T}$  \\
		${{\pmb{\xi }}_4}$ & ${\left[ {\begin{smallmatrix}
				{0}&{0}&{ -1}&{ -50}&{250}&{0}
				\end{smallmatrix}} \right]^T}$& ${\left[ {\begin{smallmatrix}
				0.062&0.013&{ - 0.998}&-51&{249}&0.0752
				\end{smallmatrix}} \right]^T}$  \\
		${{\pmb{\xi }}_5}$ & ${\left[ {\begin{smallmatrix}
				{0}&{-1}&{  0}&{ - 20}&{0}&{-250}
				\end{smallmatrix}} \right]^T}$ & ${\left[ {\begin{smallmatrix}
				0.001&-1.00004&{0}&-20.6&{-0.0206}&-249
				\end{smallmatrix}} \right]^T}$ \\
		${{\pmb{\xi }}_6}$ & ${\left[ {\begin{smallmatrix}
				{0}&{0}&{ -1}&{ - 50}&{250}&{0}
				\end{smallmatrix}} \right]^T}$ & ${\left[ {\begin{smallmatrix}
				0.095&0.031&{ - 0.995}&-51&{249}&0
				\end{smallmatrix}} \right]^T}$ \\
		${{\pmb{\xi }}_{T}}$ & ${\left[ {\begin{smallmatrix}
				{0}&{0}&{ 0}&{ 250}&{50}&{-20}
				\end{smallmatrix}} \right]^T}$ & ${\left[ {\begin{smallmatrix}
				0.02&-0.01&{ 0.01}&249&{51}&-20.6
				\end{smallmatrix}} \right]^T}$ \\
		\bottomrule
	\end{tabular}

\end{table}

\begin{table*}[!tbp]
	\caption{D-H Parameterization of the PUMA 560 Robot Using the Proposed Algorithm}
	\label{tab:DH}
	\centering
	\scriptsize
	\begin{tabular}{cccccccccccccc}
		\toprule
		\multirow{2}{*}{ } & \multicolumn{6}{c}{Nominal} &&\multicolumn{6}{c}{Actual}\\
		\cline{2-7} \cline{9-14}
		& $\theta (^{\circ})$ & $d(mm)$ & $\alpha (^{\circ})$ & $a (mm)$ & $h (mm/^{\circ})$ & $\bar{q}$ && $\theta (^{\circ})$ & $d (mm)$ & $\alpha (^{\circ})$ & $a (mm)$ & $h (mm/^{\circ})$ & $\bar{q}$\\
		\midrule
		$^B\bm{H}_0$ &\color{red}{\underline{0}}&0&0&0& & && \color{red}{\underline{63.4349}} & 1.0000 & 2.5632 & 0 &   &  \\
		$^0\bm{H}_1$ &\color{red}{\underline{0}}&0&90&0&0&1&& \color{red}{\underline{116.5421}} & -1.0214 & -88.8540 & 0.0092 & 0 & 1.0000005\\
		$^1\bm{H}_2$ &\color{red}{\underline{0}}&\color{red}{\underline{0}}&0&\color{red}{\underline{100}}&0&1&& \color{red}{\underline{-88.0290}} & \color{red}{\underline{-567.6554}} & 10.2538 & \color{red}{\underline{0.5114}} & 0 & 1.00002\\
		$^2\bm{H}_3$ &\color{red}{\underline{0}}&\color{red}{\underline{-50}}&90&150&0&1&& \color{red}{\underline{-86.0527}} & \color{red}{\underline{552.7518}} & 90.0434 & 153.3035 & 0.0014 &0.99997\\
		$^3\bm{H}_4$ &180&20&90&0&0&1&& -169.8271 & 30.1922 & 90.7413 & 1.8365 & -0.0000009 &1.000008\\
		$^4\bm{H}_5$ &180&0&90&0&0&1&& -178.0999 & -0.7927 & 91.7710 & 3.1761 & 0.00000001 &1.00004\\
		$^5\bm{H}_6$ &\color{red}{\underline{0}}&\color{red}{\underline{0}}&180&0&0&1&& \color{red}{\underline{82.8018}} & \color{red}{\underline{-37.0573}} & 175.0748 & -0.7202 & 0.0502 &1.000005\\
		$^6\bm{H}_T$ &\color{red}{\underline{0}}&\color{red}{\underline{0}}& & & & && \color{red}{\underline{83.1872}} & \color{red}{\underline{-35.4328}} &   &   &   &  \\
		\bottomrule
	\end{tabular}
\end{table*}


To test the accuracy of the conversion, 100 groups of independent configurations are generated.
In each configuration, the joint variables are uniformly sampled from the interval $[-\pi,\pi]$.
Then we calculate the forward kinematics using both the POE model and the converted D-H model and compare the resultant orientations and positions of the tool frame with respect to the base frame.
The rotation and translation errors are defined as
\begin{eqnarray}
&e_R=||EA(\bm{R}_{DH}^{-1}\bm{R}_{POE})||\\
&e_t=||\bm{t}_{DH}-\bm{t}_{POE}||
\end{eqnarray}
where $\bm{R}_{DH}$ and $\bm{t}_{DH}$ are the rotation matrix and translation vector of the tool frame using the D-H model, and $\bm{R}_{POE}$ and $\bm{t}_{POE}$ are those of the POE model.
In addition, $EA(\cdot)$ means the Euler angle decomposition of a rotation matrix in the order of ZYX.
The errors are plotted in Fig.~\ref{fig:Result}.

It can be seen from Fig.~\ref{fig:Result} that the errors between the D-H model and the POE model are in the order of $10^{-15}$ for rotation and translation.
Considering the round-off error in MATLAB is around $10^{-16}$ by default, which is comparable to the errors of the results if taking the accumulation of errors into account, the two models can be regarded as equivalent for both the nominal and the actual parameters. 

\begin{figure}[!tbp]
	\centering
	\includegraphics[scale=.58]{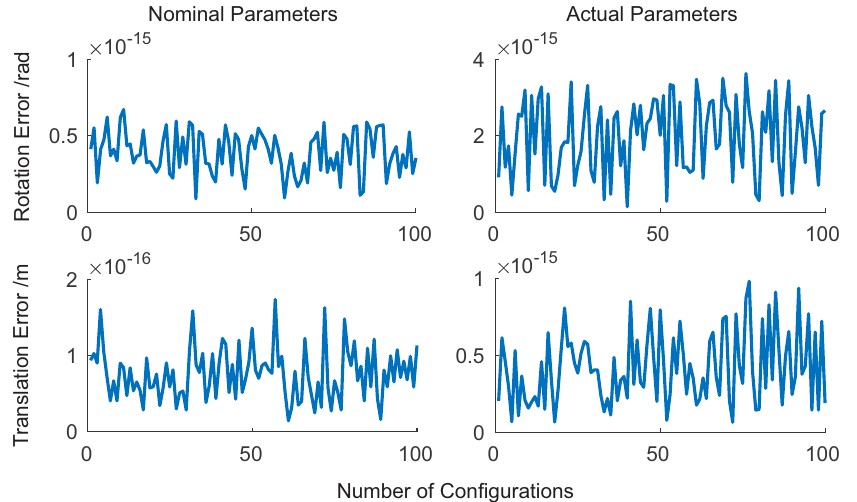}
	\caption{Rotational and translational errors between the normalized POE model and the converted D-H model with 100 randomly selected configurations.}
	\label{fig:Result}
\end{figure}

\section{Discussion on the Identifiability of Kinematic Parameters} \label{sec:discussion}


\subsection{Identifiability of the POE Model}

Recently, debates have been raised around the identifiability of the kinematic parameters of a serial-link robot using the POE formula \cite{li2016poe,chen2014determination,yang2014minimal,he2010kinematic}.
In \cite{he2010kinematic}, it was claimed that the maximum number of the identifiable parameters in a POE formula is 
\begin{equation} \label{eq:c1}
	C_1 = 6r+3t+6
\end{equation}
where $r$ and $t$ mean the number of revolute and prismatic joints, respectively.
In the local POE model proposed in \cite{chen2001local}, the number of parameters to be identified was also as described in (\ref{eq:c1}).
However, this equation does not conform with the conventional opinion that the maximum number of the identifiable parameters should be 
\begin{equation} \label{eq:c2}
	C_2 = 4r+2t+6.
\end{equation}
Later, it was pointed out that the maximum number of identifiable parameters does coincide with (\ref{eq:c2}) both in the standard POE formula \cite{li2016poe,yang2014minimal} and in the local POE formula \cite{chen2014determination}.
We will show that it is quite straightforward to analyze the identifiability with the assistance of the geometric interpretation of the D-H parameterization.

First, it is easy to see that the joint offsets $\Delta q_i$ cannot be identified together with the initial twist $\bm{\xi}_T$.
As mentioned previously, the initial configuration is defined when all the joints are in the positions that counteract the joint offsets.
If $\bm{\xi}_T$ is included in the identification model, then there are no constraints on the initial configuration.
As $\Delta q_i$ can be merged with $\theta_i$ or $d_i$ after D-H parameterization, it is impossible to uniquely distinguish the joint offsets from the D-H parameters. 

Second, it can be seen that a comprehensive decomposition of the maximum number of identifiable parameters should be expressed as
\begin{equation} \label{eq:c3}
	C_3 = \underbrace{5h + 4r + 2t}_{\bm{\bar{\xi}_i}} + \underbrace{n}_{\bar{q}_i} + \underbrace{6}_{\bm{\xi}_{T}}
\end{equation}
where $h$, $r$, $t$, and $n$ stand for the number of helical, revolute, prismatic, and general joints.
Eq. (\ref{eq:c3}) has three components: the normalized joint twists that take all the three basic joint types into account; the normalization factors that can describe the linear variation of the joint variables, such as the coefficient between the actual joint angles and the encoder readings; and the initial placement of the tool frame with respect to the base frame.
In a conventional setting, the helical motions and the normalization factors are excluded from the identification model, and thus (\ref{eq:c3}) degenerates to (\ref{eq:c2}).
In this case, a normalization and an orthogonalization are required in each identification step, or the two constraints need to be embedded in the identification model.
And (\ref{eq:c3}) evolves to (\ref{eq:c1}) if assuming that all the revolute joints have a pattern of helical motion and all the joint variables have linear variation.
Based on this analysis, it is clear that different set of parameters should be selected according to the practical characteristics of the robot to be identified.

\subsection{Identifiability of the D-H Model}
Due to the equivalence between the POE model and the D-H model, the number of the identifiable parameters should be the same.
Since the POE based identification model has included the complete set of identifiable parameters, it should also be the case with the converted D-H model.
This gives some indication on the identifiability of the base frame and the tool frame in the D-H model.
In the traditional analysis of the identifiability of the D-H model as represented in (\ref{eq:c2}), it is simply regarded that the extra six parameters come from the placement of the tool frame because there are six degrees of freedom to restrict a general rigid-body motion.
Based on the new D-H parameterization proposed in this paper as shown in Fig.~\ref{fig:NewDHModel}, it can be seen that among these six parameters, four are allocated to the transform from the base frame to the first joint frame, and two are allocated to the transform from the last joint frame to the tool frame.
This means that the extra six identifiable parameters are actually restricted rather than arbitrarily allocated.
In the conventional D-H parameterization, the base frame is defined to coincide with the first joint frame.
In that case, there are only two extra parameters that can be identified, not six.
This has not been correctly pointed out to the best of the authors' knowledge.

The singularity of the D-H based identification model when the robot has adjacent parallel joint axes is also reflected in the example given in Section~\ref{sec:validation}.
In Table~\ref{tab:DH}, the parameters that changed significantly ($> 60^\circ$ or $> 30$ mm) between the nominal and the actual are marked red and underlined.
In the example model, the axis pairs (Base Z, joint 1), (joint 2, joint 3), and (joint 6, Tool Z) are parallel.
As a result, some of the identified D-H parameters in ($^B\bm{H}_0$, $^0\bm{H}_1$), ($^1\bm{H}_2$, $^2\bm{H}_3$), and ($^5\bm{H}_6$, $^6\bm{H}_T$) changed significantly from their nominal values.
This singularity problem has been previously pointed out from the geometric viewpoint that the D-H parameterization has different modeling rules for parallel and nonparallel axis pairs.
From the derivation in this paper, the mathematical reason for this problem can be found as the discontinuity between the two cases for $\omega_3 = \pm 1$ and $\omega_3 \neq \pm 1$.
Due to the singularity, the robustness of the identification algorithm directly based on the D-H parameters is affected.
However, the results in this paper suggest that a solution to circumvent the singularity could be to convert the D-H parameters into POE parameters first, identify the POE parameters, and then convert the POE parameters back to D-H parameters for compensation.


\section{Conclusion}
This paper has developed an analytic solution to automatically convert a POE model into a D-H model for a robot with revolute, prismatic, and helical joints, which are the complete set of three basic one DoF lower pair joints for the construction of a serial-link robot.
The algorithm has been validated by an example on the PUMA 560 robot.
An implementation of the algorithm in MATLAB is also provided.

The identifiability of the kinematic parameters is analyzed based on the equivalence of the two models.
It is found that the maximum number of identifiable parameters in a general POE model is $5h+4r+2t+n+6$ where $h$, $r$, $t$, and $n$ stand for the number of helical, revolute, prismatic, and general joints, respectively. 
It is also suggested that the identifiability of the base frame and the tool frame in the D-H model is restricted rather than the arbitrary six parameters as assumed previously.
In addition, the results indicate that the singularity problem in the D-H based calibration can be circumvented by converting the D-H model to the POE model for calibration and then converting the POE model back to the D-H model for compensation.

\bibliography{DHPOE.bib}

\begin{thebibliography}{10}

\bibitem{brockett1984robotic}
R.~W. Brockett, ``Robotic manipulators and the product of exponentials
  formula,'' in {\em Mathematical Theory of Networks and Systems},
  pp.~120--129, Springer, 1984.

\bibitem{murray1994mathematical}
R.~M. Murray, Z.~Li, and S.~S. Sastry, {\em A Mathematical Introduction to
  Robotic Manipulation}.
\newblock CRC press, 1994.

\bibitem{selig2005geometric}
J.~Selig, {\em Geometric Fundamentals of Robotics}.
\newblock New York: Springer-Verlag, 2005.

\bibitem{lynch2017modern}
K.~M. Lynch and F.~C. Park, {\em Modern Robotics: Mechanics, Planning, and
  Control}.
\newblock Cambridge University Press, 2017.

\bibitem{okamura1996kinematic}
K.~Okamura and F.~C. Park, ``Kinematic calibration using the product of
  exponentials formula,'' {\em Robotica}, vol.~14, no.~4, pp.~415--421, 1996.

\bibitem{he2010kinematic}
R.~He, Y.~Zhao, S.~Yang, and S.~Yang, ``Kinematic-parameter identification for
  serial-robot calibration based on {POE} formula,'' {\em IEEE Transactions on
  Robotics}, vol.~26, no.~3, pp.~411--423, 2010.

\bibitem{yang2014minimal}
X.~Yang, L.~Wu, J.~Li, and K.~Chen, ``A minimal kinematic model for serial
  robot calibration using {POE} formula,'' {\em Robotics and
  Computer-Integrated Manufacturing}, vol.~30, no.~3, pp.~326--334, 2014.

\bibitem{wu2015minimal}
L.~Wu, X.~Yang, K.~Chen, and H.~Ren, ``A minimal {POE}-based model for robotic
  kinematic calibration with only position measurements,'' {\em IEEE
  Transactions on Automation Science and Engineering}, vol.~12, no.~2,
  pp.~758--763, 2015.

\bibitem{denavit1955kinematic}
J.~Denavit and R.~S. Hartenberg, ``A kinematic notation for lower-pair
  mechanisms based on matrices,'' {\em Transactions of ASME Journal of Applied
  Mechanics}, vol.~22, pp.~215--221, 1955.

\bibitem{craig2005introduction}
J.~J. Craig, {\em Introduction to {R}obotics: {M}echanics and {C}ontrol},
  vol.~3.
\newblock Pearson Prentice Hall Upper Saddle River, 2005.

\bibitem{siciliano2008springer}
B.~Siciliano and O.~Khatib, {\em Springer {H}andbook of {R}obotics}.
\newblock Springer Science \& Business Media, 2008.

\bibitem{siciliano2010robotics}
B.~Siciliano, L.~Sciavicco, L.~Villani, and G.~Oriolo, {\em Robotics:
  {M}odelling, {P}lanning and {C}ontrol}.
\newblock Springer Science \& Business Media, 2010.

\bibitem{corke2011robotics}
P.~Corke, {\em Robotics, {V}ision and {C}ontrol: {F}undamental {A}lgorithms in
  {MATLAB}}, vol.~73.
\newblock Springer, 2011.

\bibitem{wu2017an}
L.~Wu, R.~Crawford, and J.~Roberts, ``An analytic approach to converting {POE}
  parameters into {D-H} parameters for serial-link robots,'' {\em IEEE Robotics
  and Automation Letters}, vol.~2, no.~4, pp.~2174--2179, 2017.

\bibitem{smadi2012development}
I.~A. Smadi, H.~Omori, and Y.~Fujimoto, ``Development, analysis, and
  experimental realization of a direct-drive helical motor,'' {\em IEEE
  Transactions on Industrial Electronics}, vol.~59, no.~5, pp.~2208--2216,
  2012.

\bibitem{bertelsen2013review}
A.~Bertelsen, J.~Melo, E.~S{\'a}nchez, and D.~Borro, ``A review of surgical
  robots for spinal interventions,'' {\em The International Journal of Medical
  Robotics and Computer Assisted Surgery}, vol.~9, no.~4, pp.~407--422, 2013.

\bibitem{li2016poe}
C.~Li, Y.~Wu, H.~L{\"o}we, and Z.~Li, ``{POE}-based robot kinematic calibration
  using axis configuration space and the adjoint error model,'' {\em IEEE
  Transactions on Robotics}, vol.~32, no.~5, pp.~1264--1279, 2016.

\bibitem{chen2014determination}
G.~Chen, H.~Wang, and Z.~Lin, ``Determination of the identifiable parameters in
  robot calibration based on the {POE} formula,'' {\em IEEE Transactions on
  Robotics}, vol.~30, no.~5, pp.~1066--1077, 2014.

\bibitem{mooring1991fundamentals}
B.~W. Mooring, Z.~S. Roth, and M.~R. Driels, {\em Fundamentals of manipulator
  calibration}.
\newblock Wiley-interscience, 1991.

\bibitem{chen2001local}
I.-M. Chen, G.~Yang, C.~T. Tan, and S.~H. Yeo, ``Local {POE} model for robot
  kinematic calibration,'' {\em Mechanism and Machine Theory}, vol.~36,
  no.~11-12, pp.~1215--1239, 2001.

\end{thebibliography}
\bibliographystyle{IEEEtran}

\end{document}